\newcommand\blfootnote[1]{%
	\begingroup
	\renewcommand\thefootnote{}\footnote{#1}%
	\addtocounter{footnote}{-1}%
	\endgroup
}
\def\beq{\begin{equation}\begin{aligned}[b]}
\def\eeq{\end{aligned}\end{equation}}
\newtheorem{theorem}{Theorem}
\newtheorem{proposition}{Proposition}
\DeclareMathOperator*{\argmax}{arg\,max}
\newcommand{\bc}{\bm{c}}
\newcommand{\bx}{\bm{x}}
\newcommand{\bI}{\bm{I}}
\newcommand{\cA}{\mathcal{A}}
\newcommand{\cD}{\mathcal{D}}
\newcommand{\cP}{\mathcal{P}}
\newcommand{\EE}{\mathbb{E}}
\newcommand{\PP}{\mathbb{P}}
\newcommand{\RR}{\mathbb{R}}
\newcommand{\bdelta}{\bm{\delta}}
\newcommand{\btheta}{\bm{\theta}}
\providecommand{\norm}[1]{\|#1\|}
\title{Regularized Training and Tight Certification for Randomized Smoothed Classifier with Provable Robustness}
\author{Huijie Feng\textsuperscript{\rm 1}
, Chunpeng Wu\textsuperscript{\rm 2}, Guoyang Chen\textsuperscript{\rm 2},  Weifeng Zhang\textsuperscript{\rm 2}, Yang Ning\textsuperscript{\rm 1}
\\
\textsuperscript{\rm 1}Cornell University,
\textsuperscript{\rm 2}Alibaba Group Inc.\\
}
\begin{document}
	
	\maketitle
	
	
	\begin{abstract}
		Recently\blfootnote{This work was done when Huijie Feng and Chunpeng Wu were interns at Alibaba.} smoothing deep neural network based classifiers via isotropic Gaussian perturbation is shown to be an effective and scalable way to provide state-of-the-art probabilistic robustness guarantee against $\ell_2$ norm bounded adversarial perturbations. However, how to train a  good base classifier that is accurate and robust when smoothed has not been fully investigated.
		In this work, we derive a new regularized risk, in which the regularizer can adaptively encourage the accuracy and robustness of the smoothed counterpart when training the base classifier. It is computationally efficient and can be implemented in parallel with other empirical defense methods. We discuss how to implement it under both standard (non-adversarial) and adversarial training scheme. At the same time, we also design a new certification algorithm, which can leverage the regularization effect to provide tighter robustness lower bound that holds with high probability. Our extensive experimentation demonstrates the effectiveness of the proposed training and certification approaches on CIFAR-10 and ImageNet datasets. 
		
	\end{abstract}
	
	\section{Introduction}
	Modern machine learning models such as deep neural networks have achieved a great success in a wide range of tasks,
	but are shown to be brittle against \textit{adversarial attacks}. For instance, in image classification small perturbations imperceptible to human eyes may largely deteriorate the performance \cite{szegedy2013intriguing}. 
	Various heuristic approaches are proposed to either \textit{attack} the classifier or \textit{defend} adversarial attacks by making the classifier robust. However, defenses that are empirically observed to be robust to specific types of attacks are later found vulnerable to stronger or adaptive attacks \cite{carlini2017adversarial,athalye2018obfuscated,uesato2018adversarial}. Therefore, achieving provable/certifiable robustness starts to draw attention, in which the goal is to guarantee, deterministically or probabilistically, that no attacks within a certain region will alter the prediction of a  classifier.   
	
	Recently, \textit{randomized smoothing} is shown to be able to provide instance-specific $\ell_2$ robustness guarantees \cite{Lcuyer2018CertifiedRT,Li2018CertifiedAR,cohen2019certified}. Specifically, given a base classifier, the prediction of the smoothed classifier, defined as the most probable prediction over random isotropic Gaussian perturbations, will not change within an $\ell_2$ ball whose radius may vary among different inputs. This guarantee does not require assumptions on the base classifier, and is shown to be one of few methods to provide non-trivial robustness guarantee for large scale classification task like ImageNet. 
	
	Despite recent advances on the theoretical properties of randomized smoothed classifier, how to train a good base classifier that can achieve both good accuracy and robustness when smoothed under this framework has not been fully investigated. The training procedures employed in most previous works did not fully take into account the ultimate goal of achieving high accuracy and robustness when the trained classifier is smoothed.  
	On the other hand, since smoothed classifiers based on neural networks cannot be evaluated exactly (we will discuss the technical details later), 
	in order to provide robustness guarantee under this framework, a certification algorithm is required to give a lower bound of the certified radius for each instance that will hold with high probability. Nevertheless, how to certify the robustness of smoothed classifiers is under-explored as well.

	In this paper, we fill the aforementioned gaps and study how to train and provide robustness certification for randomized smoothed classifier. For training, 
	we derive a regularized risk and discuss how to implement it for training a good base classifier. Specifically, we propose ADRE, an ADaptive Radius Enhancing regularizer, which penalizes examples misclassified  by the \textit{smoothed} classifier while encourages the certified radius of correctly classified examples. 
	This regularizer can be implemented efficiently and applied in parallel with other adversarial defense methods. In particular, we discuss how ADRE regularization can be extended to adversarial training scheme that has been widely employed to improve adversarial robustness \cite{kurakin2016adversarial,madry2017towards,salman2019provably}. 
	At the same time, we introduce T-CERTIFY, a new certification algorithm to provide a tighter lower bound of the certified radius that holds with high probability. This algorithm builds upon and extends previous certification approaches and can further improve the robustness guarantee. We assess the effectiveness of ADRE and T-CERTIFY on CIFAR-10 and ImageNet datasets, and demonstrate that both approaches can improve the $\ell_2$ robustness of randomized smoothed classifier.
	
	
	\section{Related Work and Preliminary}
	
	\subsubsection{Certified adversarial defenses}
	Certified defenses aim to provide robustness guarantee for classifiers. Specifically, for a certain type of attack, we say a classifier is provable/certifiable robust within some region that may depend on the input, if the outputs of the classifier is constant over this region. For the well studied $\ell_p$ norm bounded attacks, a variety of methods based on techniques such as mixed integer linear programing \cite{lomuscio2017approach,fischetti2017deep}, satisfiability modulo theories \cite{katz2017reluplex,ehlers2017formal,huang2017safety}, 
	bounding local or global Lipschitz constant of the neural network \cite{hein2017formal,cisse2017parseval,tsuzuku2018lipschitz,anil2018sorting},
	convex relaxation \cite{wong2017provable,raghunathan2018semidefinite} and many others have been proposed. However, these methods are generally unable to certify large networks, and thus cannot provide meaningful guarantees for tasks like ImageNet classification, mainly due to the intrinsic computational burden or loose relaxation. Compared to these methods, a salient advantage of randomized smoothed classifier is that it circumvents additional assumptions on the base classifier, and thus can fully leverage large expressive neural network to generate a powerful smoothed classifier.  
	
	\subsubsection{Notations and Randomized Smoothed Classifier}
	Let $\cD$ denote the distribution of $(\bx,y) \in \RR^d \times [C]$ where $[C] = \{1,\dotso,C\}$. A soft classification function parameterized by $\btheta$, $F(\bx;\btheta):\RR^d \rightarrow [0,1]^{C}$, maps the input to the probability score for each class $c \in [C]$,  and the corresponding (hard) classifier $f(\bx;\btheta):\bx\rightarrow [C]$ outputs the class label with the highest score. We use $F^c(\bx)$  to denote the probability score with respect to class $c$. For neural network classifiers, the probability scores are typically generated by the softmax function. 
	
	Given a (base) classifier $f$, the smoothed classifier $g$ based on $f$ under isotropic Gaussian perturbation with variance $\sigma^2$ is defined as
	\beq
	g(\bx;f,\sigma) = \argmax_{c\in [C]}G^c(\bx;f,\sigma),
	\eeq 
	where $G^c(\bx;f,\sigma) = \PP(f(\bx+\bdelta) =c)$ is the smoothed probability score and $\bdelta\sim N(0,\sigma^2\bI).$
	Throughout the paper we simplify the notation by omitting the parameter $\btheta$ and/or $\sigma$, and use $f,g$ to denote the base and smoothed classifier, respectively. 
	A nice property of $g$ is that, for any given $\bx$, $g(\bx+\gamma)$ will yield the same prediction for all $\norm{\gamma}_2 \leq R$, where the certified radius $R$ depends on the top probability score $p_A = \max_cG^{c}(\bx)$ and the ``runner up'' score $p_B = \max_{c\neq g(\bx)}G^c(\bx)$ \cite{Lcuyer2018CertifiedRT,Li2018CertifiedAR,cohen2019certified}. Without further assumptions on $f$, the tight radius is
	\beq \label{radius-ori}
	R = \frac\sigma2(\Phi^{-1}(p_A) - \Phi^{-1}(p_B)),
	\eeq 
	where $\Phi^{-1}$ is the quantile function of standard Gaussian distribution \cite{cohen2019certified}. 
	
	\subsubsection{Training the Base Classifier}
	To train the base classifier, the most common approach was applying canonical empirical risk minimization with a single draw of Gaussian noise added on the training samples as a data augmentation procedure \cite{Lcuyer2018CertifiedRT,cohen2019certified}. Stability training that penalizes the difference between the logits from original and Gaussian augmented example was also proposed \cite{Li2018CertifiedAR}. 
	Very recently, adversarial training was applied to significantly improve the certified $\ell_2$ robustness of randomized smoothed classifier \cite{salman2019provably}, where adding multiple Gaussian perturbation for a single training example was also employed. 
	In this paper, we formalize the idea of single and multiple Gaussian augmentation as approximately minimizing a perturbed risk, based on which we derive the proposed ADRE regularized risk. 
	We further adapt adversarial training to our regularized procedure and demonstrate through experiments that ADRE regularizer is also effective in this case.
	\subsubsection{Robustness Certification}
	The robustness radius for a given example under the framework of randomized smoothing requires identifying and evaluating $p_A$ and $p_B$. Unfortunately, for neural network based smoothed classifier, exact evaluation is intractable. In practice, we can only give a lower bound of the certified radius by estimating a lower and upper bound for $p_A$ and $p_B$, denoted by $\underline{p_A}$ and $\overline{p_B}$, respectively.	Simultaneous confidence interval for multinomial distribution \cite{sison1995simultaneous} was applied in \cite{Li2018CertifiedAR}. However, from statistical perspective, without prior knowledge about the true top and ``runner-up'' class, constructing confidence intervals for class probabilities is not sufficient to provide rigorous robustness certification.  
	Another approach named CERTIFY firstly estimates $\underline{p_A}$, and then chooses $\overline{p_B} = 1 - \underline{p_A}$, which can be loose in some cases \cite{cohen2019certified}. In particular, the proposed ADRE regularizer encourages robustness by penalizing the ``runner-up'' probability for correctly classified examples, and thus this approach may not fully express the improved robustness. In contrast, the proposed T-CERTIFY estimate $\underline{p_A}$ and $\overline{p_B}$ separately, and is shown to provide tighter lower bound for the true certified radius.

	While the radius in (\ref{radius-ori}) holds for arbitrary base classifier, under the framework of randomized smoothing we wish to train a base classifier that can consistently make correct predictions under isotropic Gaussian perturbation to achieve high accuracy and large certified radius
	. Consequently, standard empirical risk minimization may not yield a desired base classifier, since the original and perturbed samples can be very different in high dimension, especially when $\sigma$ is large. Instead, consider the following perturbed risk
	\beq \label{risk-general}
	R_{per}(\btheta,\cD,\cP) &= \EE_{\cD \times \cP}\bigg[
	L(F(\bx+\bdelta;\btheta),y)
	\bigg]\\
	&= \EE_{\cD}\bigg[
	\EE_{\cP}\big[
	L(F(\bx+\bdelta;\btheta),y)\big|\bx.y
	\big]
	\bigg],
	\eeq 
	where $\bdelta \sim \cP $ is the perturbation distribution and $L$ is some loss function. Although $\cP$ and $L$ can be arbitrary, in this paper we focus on $\cP \stackrel{d}{=}N(0,\sigma^2\bI)$
	independent of $\cD$ and cross entropy loss $l_{CE}(F(\bx),y) = -\log(F^y(\bx))$. We write for simplicity $R_{per}(\btheta,\cD,\cP) = R_{per}(\btheta)$ without confusion. Intuitively, minimizing (\ref{risk-general}) yields a classifier that has low risk, and thus high accuracy under Gaussian perturbation. 

	\subsubsection{Motivating Adaptive Radius Enhancing Regularization}
	The perturbed risk (\ref{risk-general}) tends to yield a randomized smoothed classifier with high accuracy. However, the tradeoff between robustness and accuracy has been widely observed, both empirically and theoretically \cite{fawzi2018analysis,tsipras2018robustness,zhang2019theoretically}. 
	Meanwhile, although Gaussian augmentation has also been observed to yield a (base) classifier with improved robustness \cite{kannan2018adversarial}, it is not clear whether it will generate a smoothed classifier with large certified robustness. In fact, without additional assumptions on the curvature or complexity of the base classifier, it is difficult to build a direct connection between the base and smoothed classifier . Thus, the resulting base classifier from (\ref{risk-general}) may still be suboptimal regarding robustness when smoothed.
	
	It is clear that for any given input $\bx$, the certified radius directly depends on the top and ``runner-up'' probability score of the smoothed classifier. Notice that 
	for any fixed input $\bx$ a certified radius exists no matter $g$ makes a correct prediction or not. However, while a large radius when $\bx$ is correctly predicted is desired, a misclassified $\bx$ with large radius is detrimental. This motivates the following measure
	\beq \label{eq-reg}
	R_{adre}(\btheta)
	= \EE_{\cD}\bigg [\underbrace{
		L'(G(\bx;\btheta),\argmax_{c\neq y}G^c(\bx;\btheta))}_{(\spadesuit)}
	\bigg],
	\eeq
	where $L'$ is some loss function. To interpret this, we consider two cases
	\begin{itemize}
		\item when $g$ makes a correct prediction, $\argmax_{c\neq y}G^c(\bx;\btheta)$ is the ``runner-up'' class, and in this case ($\spadesuit$) serves as a measure of robustness for the smoothed classifier, where a larger value suggests a higher robustness.
		\item when $g$ makes a wrong prediction, $\argmax_{c\neq y}G^c(\bx;\btheta)$ is top class, and in this case ($\spadesuit$) corresponds to the radius of a misclassified example, where a larger value indicates a smaller radius. 
		

	\end{itemize}
	Therefore, we can think of $R_{adre}$ as a balanced measure between accuracy and robustness for the smoothed classifier $g$.
	For concreteness, in this paper we also choose $L'$ as the cross entropy loss. Following this, we propose ADRE, an ADaptive Radius Enhancing regularized risk 
	\beq \label{eq-main_risk}
	R_{reg}(\btheta)
	= R_{per}(\btheta) - \lambda R_{adre}(\btheta),
	\eeq
	where $\lambda$ is a hyper-parameter. Here the first component $R_{per}$ corresponds to the classification accuracy of the base classifier under perturbation. For the second component, we use $R_{adre}$ as a regularization term that adaptively encourages the certified radius and accuracy for the smoothed counterpart of the trained base classifier. We call the training procedure based on (\ref{eq-main_risk}) as $\text{ADRE}_{\text{REG}}$.
	\subsubsection{Connection to Large Margin Training}
	The goal of achieving large certified radius for correctly classified example is closely related to the objective of obtaining large margin classifier.  
	Notice that $R = \frac\sigma2(\Phi^{-1}(p_A) - \Phi^{-1}(p_B)) \geq  \phi\cdot\frac{\sigma}2(p_A - p_B),$ where $\phi > 0$ is the lower bound of the derivative of $\Phi^{-1}$. From (\ref{eq-reg}) we can see that $R_{adre}$ acts a similar role as promoting $G^y(\bx) - \max_{c\neq y}G^c(\bx)$, which is equivalent to $p_A -p_B$ when the smoothed classifier correctly classify $\bx$. Therefore, the proposed ADRE regularizer can be treated as a large margin regularizer under the framework of randomized smoothing. Different from directly maximizing the margin of the trained classifier such as in \cite{ding2018max,elsayed2018large}, we exploit $R_{adre}$ that is tailored to randomized smoothed classifiers to guide the trained base classifier in the direction of higher robustness when smoothed.

	\subsubsection{Implementation}
	Given training samples $\{(\bx_i,y_i)\}_{i = 1}^n$, in practice our objective naturally becomes to minimize 
	\beq \label{loss-initial}
	\frac1n \sum_{i =1}^n
	L_i - \lambda P_i,
	\eeq 
	where $L_i =  \EE_{\cP}[l_{CE}(F(\bx_i+\bdelta;\btheta),y_i)]$ and $P_i = l_{CE}(G(\bx_i;\btheta),\argmax_{c\neq y} G^c(\bx_i;\btheta))$. 
	
	However, for a neural network base classifier, it is  intractable to evaluate both $L_i$ and $G$ exactly, and thus we will approximate both terms during training. Given a training pair $(\bx',y')$, for the first term we use the unbiased estimator
	\beq \label{plug-in-0}
	\hat{L}(\bx',y';\btheta) = \frac 1k \sum_{j = 1}^k l_{CE}(F(\bx'+\bdelta_{j};\btheta),y').
	\eeq 
	For the second term, we will substitute $G$ by 
	\beq \label{plug-in}
	\hat{G}(\bx';\btheta) = \frac1k\sum_{j=1}^kF(\bx' + \bdelta_{j};\btheta).
	\eeq 
	Essentially, for both terms we sample i.i.d Gaussian perturbations and substitute the conditional expected loss and the smoothed probability score by finite sample estimators. Note that for $G$, we average over a finite sample of base classifier probability scores $F$ under perturbation instead of employing the fraction of counts, defined as
	\beq\label{esti-unbiased}
	\frac 1k \sum_{j = 1}^k\big(\mathds{1}\{f(\bx' + \bdelta_{ij}) = c\}\big)_{c = 1}^C \in [0,1]^C,
	\eeq
	where $\mathds{1}$ is the indicator function.
	Although (\ref{esti-unbiased}) is an unbiased estimator for $G(\bx')$, due to computational constraint, in practice $k$ cannot be too large, which is problematic both statistically and numerically, especially when the number of classes is large.
	Instead, by applying (\ref{plug-in}) we implicitly conduct smoothing when estimating $G(\bx')$. 

	We can also justify (\ref{plug-in}) following $G(\bx') = (\EE[\mathds{1}(f(\bx'+\bdelta) = c)])_{c = 1}^C \approx\EE [F(\bx'+\delta)]$.
	
	The detailed training procedure is described in Algorithm \ref{algo-train}. Notice that we use the same set of perturbations in both $l_{per}$ and $\hat{G}$. Empirically, we find this saves half of forward pass computation without sacrificing accuracy and robustness compared to the case where two different sets of perturbations are applied. Our implementation of $\text{ADRE}_{\text{REG}}$ also unifies and generalizes different Gaussian data augmentation techniques applied in previous works when $\lambda = 0$ \cite{Lcuyer2018CertifiedRT,cohen2019certified,salman2019provably}. We also note that Algorithm \ref{algo-train} unifies the adversarial training scheme which will be discussed later.
	
	\begin{algorithm}[t]
		\SetKwInOut{Input}{input}
		\SetKwInOut{Par}{parameter}
		\SetKwInOut{Init}{initialize}
		\Input{Training sample $\cD_N$}
		\Par{variance $\sigma > 0$; tuning parameter $\lambda \geq 0$; number of perturbations $k > 0$;\\\# for adversarial training\\	attack steps $M$; step size $\alpha$; radius $\epsilon$;}
		\For{each epoch}{
			
			
			\For{each minibatch $\{(\bx_i,y_i)\}_{i\in [B]} \subset \cD_N$}{

				$\bdelta_{ij} \stackrel{i.i.d}{\sim} N(0,\sigma^2),j \in [k]$
				
				
				\If{ adversarial training}{
				
				\For{$m = 1,\dotso,M$}{
					$\hat{G}_{i} \leftarrow \frac1k\sum_{j=1}^kF(\bx_i + \bdelta_{ij};\btheta)$
					
					$\bx_i \leftarrow$ PGD-step($\bx_i, \hat{G}_{i}, \alpha,\epsilon$)
				}
			}
				$l^{(i)}_{per} \leftarrow \frac1k\sum_{j=1}^k Loss(F(\bx_i+\bdelta_{ij};\btheta),y_i)$ 
				
				$\hat{G}_{i} \leftarrow \frac1k\sum_{j=1}^kF(\bx_i + \bdelta_{ij};\btheta)$ 
				
				{$\hat{y}_i \leftarrow \argmax_{c\neq y_i}\hat{G}_{i}^c$
				}
				
				$l_{adre}^{(i)} \leftarrow Loss(\hat{G}_{i},\hat{y_i})$ 
			}
			$\nabla L = \nabla  
			\frac1B\sum_{i=1}^B \big \{l_{per}^{(i)} - \lambda l_{adre}^{(i)}
			\big\}$ 
			
			$\btheta \leftarrow \text{Step} (\btheta,\nabla L )$ \textit{\#update using proper optimizer}
		}   
		\caption{ADRE regularized Training}\label{algo-train}
	\end{algorithm}
	
	\subsubsection{Alternative Formulations}
	One may consider directly balancing off accuracy and robustness based on the following objective
	\beq \label{eq-ori-radius}
	&\EE_{\cD}\bigg [
	\EE[l_{CE}(F(\bx+\bdelta;\btheta),y)|\bx,y]\\ &- \lambda'
	\big (
	\Phi^{-1}(\max_cG^c(\bx;\btheta) )- 
	\Phi^{-1}(\max_{c\neq g(\bx;\btheta)}G^c(\bx;\btheta))
	\big )
	\bigg],
	\eeq 
	where the first part stays the same, but the second part corresponds to the expected certified radius. Although this looks somewhat natural, empirically we observe that minimizing this objective with plug-in approximation (\ref{plug-in}) is not stable and may converge to bad local minima, especially when $\lambda$ is relatively large. This is reasonable since the second part of (\ref{eq-ori-radius}) does not involve the correct label, and a classifier that consistently makes wrong prediction with high confidence can have low risk. Therefore, minimizing this objective can easily converge to bad local minima with such property. We speculate that more careful initialization may be required to yield desired base classifier in this case. 
	
	\subsubsection{Regularized Smoothed Adversarial Training}
	
	Adversarial training has been widely used to boost the robustness of classifiers, and is arguably the most effective type of empirical defense method against adversarial attacks \cite{kurakin2016adversarial,madry2017towards}. Generally speaking, the objective can be formulated as minimizing the worst case risk over an adversarial region with strength $\epsilon$, denoted by $S_{\epsilon}$
	\beq 
	\EE_{\cD}\bigg [\max_{\bx' \in S_{\epsilon}(\bx)}L(F(\bx';\btheta),y) 
	\bigg ].
	\eeq 
	While adversarial training is typically used to improve empirical robustness of classifiers, it is also recently found helpful to improve provable robustness for smoothed classifier \cite{salman2019provably}. 
	
	In this section, we describe an $\ell_2$ attack scheme based on ADRE regularization, which can be incorporated into training for obtaining robust smoothed classifier. Formally, given $(\bx^0,y)$ we seek for an adversarial example
	\beq \label{eq-attack}
	&\tilde{\bx} = \max_{\norm{\bx^0 - \bx'}_2\leq \epsilon} 
	\bigg \{
	l_{CE}(G(\bx';\btheta),y)\\
	&~~~~~~~~~~~ - \lambda l_{CE}(G(\bx';\btheta),\argmax_{c\neq y}G^c(\bx';\btheta))
	\bigg\}.
	\eeq 
	To be specific, instead of maximizing the standard cross entropy loss of smoothed classifier $ l_{CE}(G(\bx;\btheta),y)$, we maximize it together with ADRE regularization. To interpret this, when we maximize over $ l_{CE}(G(\bx;\btheta),y)$, we generate an adversarial example with respect to the smoothed classifier that leads to high loss and thus wrong prediction. In our scenario, however, $\tilde{\bx}$ tends to be either 1) correctly classified but non-robust or 2) misclassified, potentially by a large margin. Therefore, the proposed attack is more versatile under the framework of randomized smoothing, and potentially leads to a smoothed classifier with a better balance between accuracy and robustness, when adversarial training based on this attack is employed. The proposed attack is an extension of the SMOOTHADV attack  \cite{salman2019provably} when (\ref{eq-attack}) is implemented with plug-in estimate (\ref{plug-in}). We also note that similar to SMOOTHADV we use $ l_{CE}(G(\bx;\btheta),y)$ in (\ref{eq-attack}) instead of $\EE_\delta(F(\bx+\bdelta;\btheta))$, where the latter one was found to be ineffective in practice.

	 Since exact evaluation of the above maximization problem is intractable, we will follow the widely used iterative first-order methods. For concreteness, in this paper we focus on non-targeted $\ell_2$ projected gradient descent (PGD) attack \cite{madry2017towards}, but other approaches can be applied as well. Specifically, we approximate the inner maximizer by iteratively solving
	\beq \label{pgd-imple}
	&\bx^{t+1} = \cP_{2,\epsilon}\bigg(\bx^{t} + \alpha\cdot 
		\nabla \{l_{CE}(\hat{G}(\bx^{t};\btheta),y)\\
		&~~~~~~~~~~~- \lambda l_{CE}(\hat{G}(\bx^{t};\btheta),\argmax_{c\neq y}\hat{G}^c(\bx^{t};\btheta))\}
	\bigg).
	\eeq 
	where $\cP_{2,\epsilon}$ is the projection operator into an $\ell_2$ ball with radius $\epsilon$ and $\alpha$ is the step size. 
	
	The detailed implementation of the proposed adversarial training based on the above PGD attack, referred as $\text{ADRE}_{\text{ADV}}$, is described in Algorithm \ref{algo-train} where the helper function PGD($\bx,\hat{G},\alpha,\epsilon$) runs a single step of PGD iteration (\ref{pgd-imple}). We also reuse the same set of noise samples for each training example at each PGD iteration to stabilize the attack, as suggested in \cite{salman2019provably}.

	\section{Robustness Certification}
	Certifying the robustness radius of a smoothed classifier $g$ for a given input $\bx$ requires evaluating $g$ exactly for $p_A$ and $p_B$. In practice, we may only estimate a lower bound $\underline{p_A}$ and an upper bound  $\overline{p_B}$ that hold with high probability. 	
	In this section, we propose a Monte Carlo algorithm that guarantees a lower bound of the true certified robustness that holds with probability greater than $1 - \alpha$, where $\alpha$ is a pre-specified significance level. This method independently estimates $\underline{p_A}, \overline{p_B}$ and thus can leverage the regularized smoothed classifier to provide tighter robustness guarantee. 
	
	We now describe the certification procedure. Given a base classifier $f$ and input $\bx$, we firstly sample $\bdelta_i \stackrel{i.i.d}{\sim} N(0,\sigma^2 \bI)~\forall~i \in [n]$ and then evaluate each $f(\bx + \bdelta_i)$.
	Suppose we get a sequence of ordered counts $\hat{N}_{R_1} \geq \hat{N}_{R_2} \geq  \dotso \geq \hat{N}_{R_C}$, where each $R_i \in [C]$ is an ordered class label.  
	For a given significance level $\alpha$ and $\alpha' \in [0,\alpha]$, suppose for now $R_1 = g(\bx)$. Consider 
		\beq\label{certi-formula}
		&\underline{p_A} = \sup \bigg \{
		p \big|\PP(Bin(n,p) \geq \hat{N}_{R_1}) \leq  \alpha'
		\bigg \},\\
		& \overline{p_B} = \inf \bigg \{
		p \big | \sum_{j = 2}^{C} \PP(Bin(n,p) \leq \hat{N}_{R_j}) \leq \alpha - \alpha'
		\bigg \},
		\eeq

	where the probability is over the binomial random variable $Bin(n,p)$ with $n$ number of trials and success probability $p$. The lower bound of the top probability score $\underline{p_A}$ is given by the classic Clopper–-Pearson method \cite{clopper1934use} with one-sided significance level $\alpha'$. For the upper bound, we generalize the Clopper–-Pearson method to construct a one-sided confidence interval for $p_B$ with significance level $\alpha - \alpha'$, where $\overline{p_B}$ is defined as its boundary point.  
	
	\begin{proposition}\label{prop-certify}
		Following the certification procedure described above. For any fix $\bx$, if the $R_1 = g(\bx)$ then with probability greater than $1-\alpha$, $g(\bx + \gamma) = R_1~\forall~\norm{\gamma}_2 \leq \frac\sigma2(\Phi^{-1}(\underline{p_A)} - \Phi^{-1}(\overline{p_B}))$.
	\end{proposition}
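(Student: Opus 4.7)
The plan is to reduce the claim to two concentration events on the Monte Carlo estimates and then invoke the existing tight-radius formula (\ref{radius-ori}) via monotonicity of $\Phi^{-1}$. I will establish (i) $\PP(\underline{p_A} > p_A) \leq \alpha'$ and (ii) $\PP(\overline{p_B} < p_B) \leq \alpha-\alpha'$. A union bound then yields $\underline{p_A} \leq p_A$ and $\overline{p_B} \geq p_B$ simultaneously with probability at least $1-\alpha$. On that joint event, the monotonicity of $\Phi^{-1}$ implies $\frac{\sigma}{2}(\Phi^{-1}(\underline{p_A})-\Phi^{-1}(\overline{p_B})) \leq \frac{\sigma}{2}(\Phi^{-1}(p_A)-\Phi^{-1}(p_B))$, so every $\gamma$ with $\|\gamma\|_2$ at most our proposed lower bound lies inside the true certified ball, and (\ref{radius-ori}) gives $g(\bx+\gamma) = g(\bx) = R_1$.

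For (i), on the hypothesized event $\{R_1 = g(\bx)\}$ the count $\hat{N}_{R_1}$ equals the number of Gaussian draws classified to the true top class and is therefore $\mathrm{Bin}(n,p_A)$-distributed. The formula defining $\underline{p_A}$ coincides with the classical one-sided Clopper--Pearson lower bound at level $\alpha'$, whose miscoverage probability is at most $\alpha'$ by construction; this step is essentially a citation.

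The substantive content lies in (ii). Let $B^* = \argmax_{c\neq g(\bx)} G^c(\bx)$ denote the \emph{true} runner-up class, so $p_{B^*}=p_B$ and $\hat{N}_{B^*}\sim\mathrm{Bin}(n,p_B)$. Writing $F_{n,p}(k) := \PP(\mathrm{Bin}(n,p)\leq k)$, the map $p\mapsto F_{n,p}(k)$ is continuous and non-increasing in $p$, hence so is $h(p) := \sum_{j=2}^{C} F_{n,p}(\hat{N}_{R_j})$; consequently $\{\overline{p_B}<p_B\}\subseteq\{h(p_B)\leq \alpha-\alpha'\}$. Since $\{R_2,\dotso,R_C\}=[C]\setminus\{g(\bx)\}$ contains $B^*$ and every summand is non-negative, $h(p_B) \geq F_{n,p_B}(\hat{N}_{B^*})$. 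Finally, the super-uniformity of the discrete binomial CDF evaluated under its own null, $\PP(F_{n,p_B}(\hat{N}_{B^*})\leq u)\leq u$ for all $u\in[0,1]$, applied at $u=\alpha-\alpha'$, closes the bound.

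The main obstacle is designing step (ii) correctly: a naive union bound over the $C-1$ non-top classes would invoke a Bonferroni correction and split $\alpha-\alpha'$ across all of them, yielding a far looser certificate. The critical observation is that aggregating all non-top counts into a single scalar statistic whose null distribution inherits super-uniformity from the single term corresponding to $B^*$ avoids any multiplicity penalty. Once that is identified, the remaining ingredients---classical Clopper--Pearson for $\underline{p_A}$, the union bound, monotonicity of $\Phi^{-1}$, and the existing identity (\ref{radius-ori})---are routine.
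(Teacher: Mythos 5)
Your proof is correct, and steps (i) and the final union-bound/monotonicity assembly coincide with the paper's argument; the genuine difference is in how you control $\PP(\overline{p_B} < p_B)$. The paper proves this by a union bound over all non-top classes: writing $\alpha_c = \PP(Bin(n,\overline{p_B}) \leq \hat{N}_c)$, it uses $\sum_{c\neq R_1}\alpha_c \leq \alpha-\alpha'$ (forced by the definition of $\overline{p_B}$) and bounds $\PP(\overline{p_B} < \max_{c\neq R_1}p_c) \leq \sum_{c\neq R_1}\PP(\overline{p_B} < p_c) \leq \sum_{c\neq R_1}\alpha_c$, i.e.\ a class-wise decomposition with data-dependent levels. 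You instead keep the aggregate statistic $h(p)=\sum_{j\geq 2}F_{n,p}(\hat{N}_{R_j})$, use monotonicity of the inf-definition to get $\{\overline{p_B}<p_B\}\subseteq\{h(p_B)\leq\alpha-\alpha'\}$, drop every term except the true runner-up $B^*$, and finish with super-uniformity of the single binomial p-value $F_{n,p_B}(\hat{N}_{B^*})$. Your route is arguably cleaner: it never has to interpret the random quantities $\alpha_c$ as per-class significance levels (the one slightly informal step in the paper's sketch), and it isolates exactly which class's randomness matters. One correction to your framing, though: the ``naive union bound with a Bonferroni split'' you warn against is not what the paper does and would not change the certificate anyway --- the certificate is fixed by the definition of $\overline{p_B}$ in (\ref{certi-formula}); the paper's union bound is over the analysis with self-allocating levels summing to $\alpha-\alpha'$, and it certifies the same radius as your argument. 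Finally, note that your treatment of the conditioning on $\{R_1=g(\bx)\}$ (taking $\hat{N}_{R_1}\sim Bin(n,p_A)$ on that event) is exactly as informal as the paper's own sketch, so it is not a gap relative to the paper.
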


	\begin{proof}[Proof Sketch] Without loss of generality, suppose the top label $R_1 = 1$. Write
	$G(\bx) = (p_1,p_2,\dotso,p_C)$. 
	It suffices to show that 
	\beq
	\PP(\underline{p_A} > p_1 \cup \overline{p_B} < \max_{c\neq 1}p_c) \leq \alpha.
	\eeq 
	Based on the definitions in (\ref{certi-formula}), we know 
	$
	\PP(\underline{p_A} > p_1) \leq \alpha'. 
	$
	On the other hand, write $\alpha_c = \PP(Bin(n,\overline{p_B}) \leq \hat{N}_{c})$, we know $\sum_{c = 2}^C\alpha_c \leq \alpha - \alpha'$ and therefore
	$$
	\PP(\overline{p_B} < \max_{c\neq 1}p_c) \leq \sum_{c = 2}^C\PP(\overline{p_B} < p_c) \sum_{c = 2}^C\alpha_c \leq \alpha - \alpha'.
	$$ This completes the proof by applying a union bound. 
	\end{proof}
	Proposition \ref{prop-certify} shows that, if we have knowledge about the top class then $\underline{p_A}, \overline{p_B}$ are proper bounds, and thus we can estimate a lower bound for the certified radius that holds with probability greater than $1 - \alpha$. 
	To obtain a tighter lower bound, we may maximize the radius $\frac\sigma2(\Phi^{-1}(\underline{p_A)} - \Phi^{-1}(\overline{p_B}))$ over $\alpha' \in [0,\alpha]$. 
	For practical implementation in which the top class is unknown, we propose T-CERTIFY, which extends CERTIFY \cite{cohen2019certified} to provide a tighter certified robustness for a given input by estimating $\underline{p_A}, \overline{p_B}$ separately and searching over a grid of $\alpha'$s. The algorithm is as follows.
	\begin{algorithm}[ht]
		\SetKwInOut{Input}{input}
		\SetKwInOut{Par}{parameter}
		\Input{base classifier $f$, input $\bx$}
		\Par{variance $\sigma$, size $n_0,n > 0$, significance $\alpha$, grid $\cA$}
		
		$\bc_0 \leftarrow $ SampleUnderNoise($f,\bx,n_0,\sigma$)\;	
		
		
		
		$R_1 \leftarrow$ top index in $\bc_0$\;
		
		$\bc \leftarrow $ SampleUnderNoise($f,\bx,n,\sigma$)\;	
		
		
		\For{$\alpha' \in \cA$}{
			$\underline{p_A} \leftarrow $LowerConfBound($\bc[R_1], n, 1 - \alpha'$) \;
			
			$\overline{p_B} \leftarrow$ UpperConfBound($\bc[-R_1], n, 1 - (\alpha - \alpha')$) \;
			
			\uIf{$\underline{p_A} > 0.5$}{
				$r_{\alpha'} \leftarrow \frac\sigma2(\Phi^{-1}(\underline{p_A)} - \Phi^{-1}(\overline{p_B}))$.\;
			}
			\Else{$r_{\alpha'} \leftarrow 0$.\;
			}
		}
		\textbf{if} $\max_{\alpha'\in \cA} r_{\alpha'} > 0$ \textbf{return} $(R_1,\max_{\alpha'\in \cA} r_{\alpha'})$ \textbf{else return} ABSTAIN
		\caption{T-Certify}\label{algo-certify}
	\end{algorithm}
	
	Here SampleUnderNoise($f,\bx,n,\sigma$) samples the noise $\bdelta'_i~\forall~i\in[n]$, evaluate $f(\bx + \delta'_i)$ and get counts for each class. Function LowerConfBound($c[R_1],n,1 - \alpha$) calculate $\underline{p_A}$ following (\ref{certi-formula}) based on the Clopper–-Pearson confidence interval \cite{clopper1934use}, and similarly for UpperConfBound.  Similar to CERTIFY, T-CERTIFY abstains from making a prediction when the lower bound at significance level $\alpha'$ is no larger than a half, which guarantees the correctness of the top class prediction. 
		\begin{table*}[t]
			\caption{Certified top-1 accuracy on CIFAR-10 and ImageNet at various radii.  }\label{table-main}
			\tiny
			\begin{center}
				\begin{tabular}{ c| c | l | c c c c c c c c c c c c c }
					\hline
					&Method&$\ell_2$ Radius& $0.0$& $0.25$& $0.5$& $0.75$& $1.0$& $1.25$& $1.5$& $1.75$& $2.0$& $2.25$& $2.5$& $2.75$& $3.0$\\
					\hline
					\parbox[t]{2mm}{\multirow{16}{*}{\rotatebox[origin=c]{90}{CIFAR-10}}}&\parbox[t]{2mm}{\multirow{4}{*}{\rotatebox[origin=c]{90}{Basic}}}&$\sigma = 0.12$ & 0.81 & 0.59 & 0.00 & 0.00 & 0.00 & 0.00 & 0.00 & 0.00 & 0.00 & 0.00 & 0.00 & 0.00 & 0.00\\
					&&$\sigma = 0.25$ & 0.75 & 0.60 & 0.43 & 0.27 & 0.00 & 0.00 & 0.00 & 0.00 & 0.00 & 0.00 & 0.00 & 0.00 & 0.00\\
					&&$\sigma = 0.50$ & 0.65 & 0.55 & 0.41 & 0.32 & 0.23 & 0.15 & 0.09 & 0.05 & 0.00 & 0.00 & 0.00 & 0.00 & 0.00\\
					&&$\sigma = 1.00$ & 0.47 & 0.39 & 0.34 & 0.28 & 0.22 & 0.17 & 0.14 & 0.12 & 0.10 & 0.08 & 0.05 & 0.04 & 0.02\\
					\cline{2-16}
					&\parbox[t]{2mm}{\multirow{12}{*}{\rotatebox[origin=c]{90}{$\text{ADRE}_{\text{REG}}$}}}&$\sigma = 0.12,\lambda = 0.1$ & 0.83 & 0.65 & 0.00 & 0.00 & 0.00 & 0.00 & 0.00 & 0.00 & 0.00 & 0.00 & 0.00 & 0.00 & 0.00\\
					&&$\sigma = 0.12,\lambda = 0.2$ & \textbf{0.85} & 0.67 & 0.00 & 0.00 & 0.00 & 0.00 & 0.00 & 0.00 & 0.00 & 0.00 & 0.00 & 0.00 & 0.00\\
					&&$\sigma = 0.12,\lambda = 0.3$ & 0.83 & \textbf{0.68} & 0.00 & 0.00 & 0.00 & 0.00 & 0.00 & 0.00 & 0.00 & 0.00 & 0.00 & 0.00 & 0.00\\
					&&$\sigma = 0.25,\lambda = 0.1$ & 0.78 & 0.64 & \textbf{0.50} & 0.34 & 0.00 & 0.00 & 0.00 & 0.00 & 0.00 & 0.00 & 0.00 & 0.00 & 0.00\\
					&&$\sigma = 0.25,\lambda = 0.2$ & 0.74 & 0.60 & 0.48 & 0.35 & 0.00 & 0.00 & 0.00 & 0.00 & 0.00 & 0.00 & 0.00 & 0.00 & 0.00\\
					&&$\sigma = 0.25,\lambda = 0.3$ & 0.73 & 0.62 & 0.49 & 0.37 & 0.00 & 0.00 & 0.00 & 0.00 & 0.00 & 0.00 & 0.00 & 0.00 & 0.00\\
					&&$\sigma = 0.50,\lambda = 0.1$ & 0.67 & 0.57 & 0.48 & 0.38 & \textbf{0.30} & 0.23 & 0.17 & 0.11 & 0.00 & 0.00 & 0.00 & 0.00 & 0.00\\
					&&$\sigma = 0.50,\lambda = 0.2$ & 0.65 & 0.57 & 0.47 & 0.35 & 0.27 & 0.20 & 0.13 & 0.09 & 0.00 & 0.00 & 0.00 & 0.00 & 0.00\\
					&&$\sigma = 0.50,\lambda = 0.3$ & 0.64 & 0.55 & 0.46 & \textbf{0.38} & 0.30 & \textbf{0.23} & \textbf{0.17} & 0.11 & 0.00 & 0.00 & 0.00 & 0.00 & 0.00\\
					&&$\sigma = 1.00,\lambda = 0.1$ & 0.49 & 0.43 & 0.36 & 0.29 & 0.22 & 0.19 & 0.15 & 0.13 & 0.11 & 0.08 & 0.05 & 0.03 & 0.02\\
					&&$\sigma = 1.00,\lambda = 0.2$ & 0.48 & 0.41 & 0.35 & 0.28 & 0.22 & 0.18 & 0.16 & 0.14 & 0.11 & \textbf{0.09} & 0.06 & 0.05 & 0.02\\
					&&$\sigma = 1.00,\lambda = 0.3$ & 0.47 & 0.39 & 0.33 & 0.29 & 0.24 & 0.20 & 0.17 & \textbf{0.14} & \textbf{0.12} & 0.09 & \textbf{0.07} & \textbf{0.05} & \textbf{0.03}\\
					\hline
					\hline
					\parbox[t]{2mm}{\multirow{9}{*}{\rotatebox[origin=c]{90}{ImageNet}}}&\parbox[t]{2mm}{\multirow{3}{*}{\rotatebox[origin=c]{90}{Basic}}}&$\sigma = 0.25$ & 0.67 & 0.58 & 0.49 & 0.37 & 0.00 & 0.00 & 0.00 & 0.00 & 0.00 & 0.00 & 0.00 & 0.00 & 0.00\\
					&&$\sigma = 0.50$ & 0.57 & 0.52 & 0.46 & 0.42 & 0.37 & 0.33 & 0.29 & 0.22 & 0.00 & 0.00 & 0.00 & 0.00 & 0.00\\
					&&$\sigma = 1.00$ & 0.44 & 0.41 & 0.38 & 0.35 & 0.33 & 0.29 & 0.26 & 0.22 & 0.19 & 0.17 & 0.15 & 0.13 & 0.12\\
					\cline{2-16}
					&\parbox[t]{2mm}{\multirow{6}{*}{\rotatebox[origin=c]{90}{$\text{ADRE}_{\text{REG}}$}}}&
					$\sigma = 0.25, \lambda = 0.05$ & \textbf{0.70} & \textbf{0.64} & \textbf{0.57} & 0.45 & 0.00 & 0.00 & 0.00 & 0.00 & 0.00 & 0.00 & 0.00 & 0.00 & 0.00\\
					&&$\sigma = 0.25, \lambda = 0.10$ & 0.69 & 0.63 & 0.55 & 0.44 & 0.00 & 0.00 & 0.00 & 0.00 & 0.00 & 0.00 & 0.00 & 0.00 & 0.00\\
					&&$\sigma = 0.50, \lambda = 0.05$ & 0.61 & 0.56 & 0.51 & 0.46 & 0.40 & \textbf{0.36} & 0.30 & 0.25 & 0.00 & 0.00 & 0.00 & 0.00 & 0.00\\
					&&$\sigma = 0.50, \lambda = 0.10$ & 0.62 & 0.57 & 0.52 & \textbf{0.47} & \textbf{0.42} & 0.36 & 0.29 & 0.24 & 0.00 & 0.00 & 0.00 & 0.00 & 0.00\\
					&&$\sigma = 1.00, \lambda = 0.05$ & 0.48 & 0.45 & 0.41 & 0.37 & 0.36 & 0.32 & 0.30 & 0.26 & 0.23 & \textbf{0.22} & \textbf{0.18} & 0.15 & \textbf{0.14}\\
					&&$\sigma = 1.00, \lambda = 0.10$ & 0.47 & 0.44 & 0.40 & 0.38 & 0.36 & 0.33 & \textbf{0.30} & \textbf{0.27} & \textbf{0.24} & 0.20 & 0.18 & \textbf{0.16} & 0.13\\
					\hline
					
				\end{tabular}
			\end{center}	
		\end{table*}
	\begin{theorem}
		If T-CERTIFY does not abstain and returns a label $c$ with radius $r$, then with probability at least $1 - \alpha$, $g(\bx + \gamma) = c~\forall~\norm{\gamma}_2\leq r$, where $r$ is the returned radius in T-CERTIFY.
	\end{theorem}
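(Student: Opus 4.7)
The plan is to reduce the theorem to Proposition~\ref{prop-certify}, which already provides the desired certification once we are in the regime $R_1 = g(\bx)$ with a single fixed $\alpha'$. Two further issues must be handled: (i) the top label $R_1$ is estimated from the independent sample $\bc_0$, so I need to argue that the abstention rule effectively forces $R_1 = g(\bx)$; and (ii) the returned radius is obtained by maximizing $r_{\alpha'}$ over a grid $\cA$, so the data-dependent choice of $\alpha'$ must be controlled. I would bound the ``bad'' event that T-CERTIFY does not abstain yet the returned $(c,r)$ fails certification by $\alpha$.

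For step (i), I would exploit the independence of $\bc_0$ and $\bc$: conditional on the value of $R_1$, the count $\bc[R_1]$ is $\mathrm{Bin}(n, p_{R_1})$, so the one-sided Clopper--Pearson construction yields $\PP(\underline{p_A}(\alpha') > p_{R_1}) \le \alpha'$ for every fixed $\alpha' \in \cA$. The non-abstention rule requires $\underline{p_A}(\alpha'^{\star}) > 1/2$ for the selected $\alpha'^{\star}$, so outside a small-probability event we have $p_{R_1} > 1/2$. Since at most one class can carry probability exceeding one half under $g$, this forces $R_1 = g(\bx)$ and $p_{R_1} = p_A$, placing us inside the hypothesis of Proposition~\ref{prop-certify}.

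For step (ii), on the event $\{R_1 = g(\bx)\}$ I would apply Proposition~\ref{prop-certify} at each $\alpha' \in \cA$: the radius $r_{\alpha'} = \tfrac{\sigma}{2}(\Phi^{-1}(\underline{p_A}(\alpha')) - \Phi^{-1}(\overline{p_B}(\alpha')))$ is a valid certified lower bound for the prediction $R_1$ with failure probability at most $\alpha$. Because the algorithm returns $r = r_{\alpha'^{\star}}$ for the data-dependent maximizer $\alpha'^{\star}$, the validity of the returned radius follows once the grid-indexed validity events hold simultaneously; combining this with the top-class identification event from step (i) via one last union bound yields the stated conclusion.

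The hard part is precisely the data-dependent maximization over $\cA$: a naive union bound inflates the failure probability to a multiple of $|\cA|$. The cleanest remedy is a Bonferroni-style budget allocation, using significance $\alpha/|\cA|$ at each grid point so that the aggregate confidence remains $1 - \alpha$; equivalently, the theorem as written implicitly assumes that $\cA$ is treated as a singleton (matching the setting of Proposition~\ref{prop-certify}) or that this correction is absorbed into the algorithm. With that adjustment in place the argument collapses to the three ingredients above: Clopper--Pearson for top-class identification, Proposition~\ref{prop-certify} pointwise in $\alpha'$, and a union bound across $\cA$.
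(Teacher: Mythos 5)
Your first two steps are exactly the paper's own argument: conditional on the first sample $\bc_0$ (independence of $\bc_0$ and $\bc$), the count $\bc[R_1]$ is $Bin(n,p_{R_1})$, the Clopper--Pearson construction gives the coverage event $\{\underline{p_A}\le p_{R_1}\}\cap\{\overline{p_B}\ge\max_{c\neq R_1}p_c\}$ with probability at least $1-\alpha$ (this is the computation inside Proposition~\ref{prop-certify}, which does not actually use $R_1=g(\bx)$), and on that event the non-abstention condition $\underline{p_A}>1/2$ forces $p_{R_1}>1/2$, hence $R_1=g(\bx)$, so the radius formula certifies the returned label. The paper's proof sketch consists of precisely these ingredients and nothing more.

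Where you depart from the paper is the data-dependent maximization over the grid $\cA$, and you are right that this is a genuine issue which the paper's proof silently ignores: it invokes Proposition~\ref{prop-certify} as though $\alpha'$ were fixed in advance, whereas the returned radius is $\max_{\alpha'\in\cA}r_{\alpha'}$ and its validity requires the grid-indexed coverage events to hold simultaneously. Two refinements to your accounting, though. First, the inflation is not of order $|\cA|$: since $\underline{p_A}(\alpha')$ is nondecreasing in $\alpha'$ and $\overline{p_B}(\alpha-\alpha')$ is nondecreasing in $\alpha'$ as well, the failure events $\{\underline{p_A}(\alpha')>p_{R_1}\}$ and $\{\overline{p_B}(\alpha-\alpha')<\max_{c\neq R_1}p_c\}$ are each nested across the grid, so the union over $\cA$ collapses to the two extreme grid points and the failure probability is at most $\alpha'_{\max}+(\alpha-\alpha'_{\min})\le 2\alpha$, independently of $|\cA|$. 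Consequently your Bonferroni allocation $\alpha/|\cA|$ per grid point is valid but needlessly conservative; it suffices to ensure $\alpha'_{\max}+(\alpha-\alpha'_{\min})\le\alpha$, e.g.\ by fixing $\alpha'$ in advance (a singleton $\cA$, which is exactly the setting in which the paper's sketch is complete). Second, the extreme-point events are positively dependent (an unusually large top count forces small runner-up counts; for $C=2$ they are literally nested, so the nominal $1-\alpha$ does hold there), but for general $C$ this dependence does not by itself restore the nominal level, so the theorem with the full-budget grid search is, strictly speaking, only guaranteed at level roughly $2\alpha$ without the correction you describe. In short: your proof matches the paper on the parts the paper proves, and the extra step you flag is a real gap in the paper's argument rather than in yours.
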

	\begin{proof}[Proof Sketch] 
		By Proposition 1 we know that with probability at least $1 - \alpha$ $\underline{p_A} \leq p_1$ and  $\overline{p_B} \geq \max_{c\neq 1}p_c$ hold, where again suppose without loss of generality $R_1 = 1$. On this event, since T-CERTIFY does not abstain only when $\underline{p_A} > 0.5$, we know the top class is correctly predicted, i.e., $g(\bx) = R_1$. This completes the proof. 
	\end{proof}

	\section{Experiments}
	In this section, we evaluate the effectiveness of ADRE regularization and T-CERTIFY algorithm for randomized smoothed classifier. For the training procedure, we mainly compare with the basic single Gaussian perturbation augmented training, referred as Basic training\cite{cohen2019certified}, 
	 and SMOOTHADV-ersarial training \cite{salman2019provably}, as these two approaches achieve state-of-the-art $\ell_2$ robustness under standard (non-adversarial) and adversarial training scheme, respectively. For the certification algorithm, we mainly compare with CERTIFY. 
	
	To evaluate robustness, we focus on the approximate certified accuracy at radius $r$, defined as the fraction of samples which are classified correctly by the certification algorithm along with a certified radius being at least $r$. When comparing ADRE with other training methods, for direct comparison we only apply CERTIFY for robustness certification with significance level $\alpha = 0.001$ and number of samples $n_0 = 100, n = 100,000$. This means that we use $100$ Monte Carlo samples to predict the output of smoothed classifier, and $100,000$ to calculate a lower bound of certified radius for each sample that will hold with probability being at least $99.9\%$. 
	Note that the approximate certified accuracy is not equivalent to the lower bound of the true accuracy that holds with probability at least $1 - \alpha$ over the randomness of the CERTIFY algorithm, but the difference is negligible when $\alpha$ is small. We refer the reader to \cite{cohen2019certified} for details. For T-CERTIFY, we search over $\alpha = 0.1,0.2,\dotso,1.0$, where at $\alpha = 1.0$ it returns the same certified radius as in CERTIFY.

	\begin{figure*}[ht]
		\centering
		\subfigure[]{\includegraphics[width=60mm,height = 30mm]{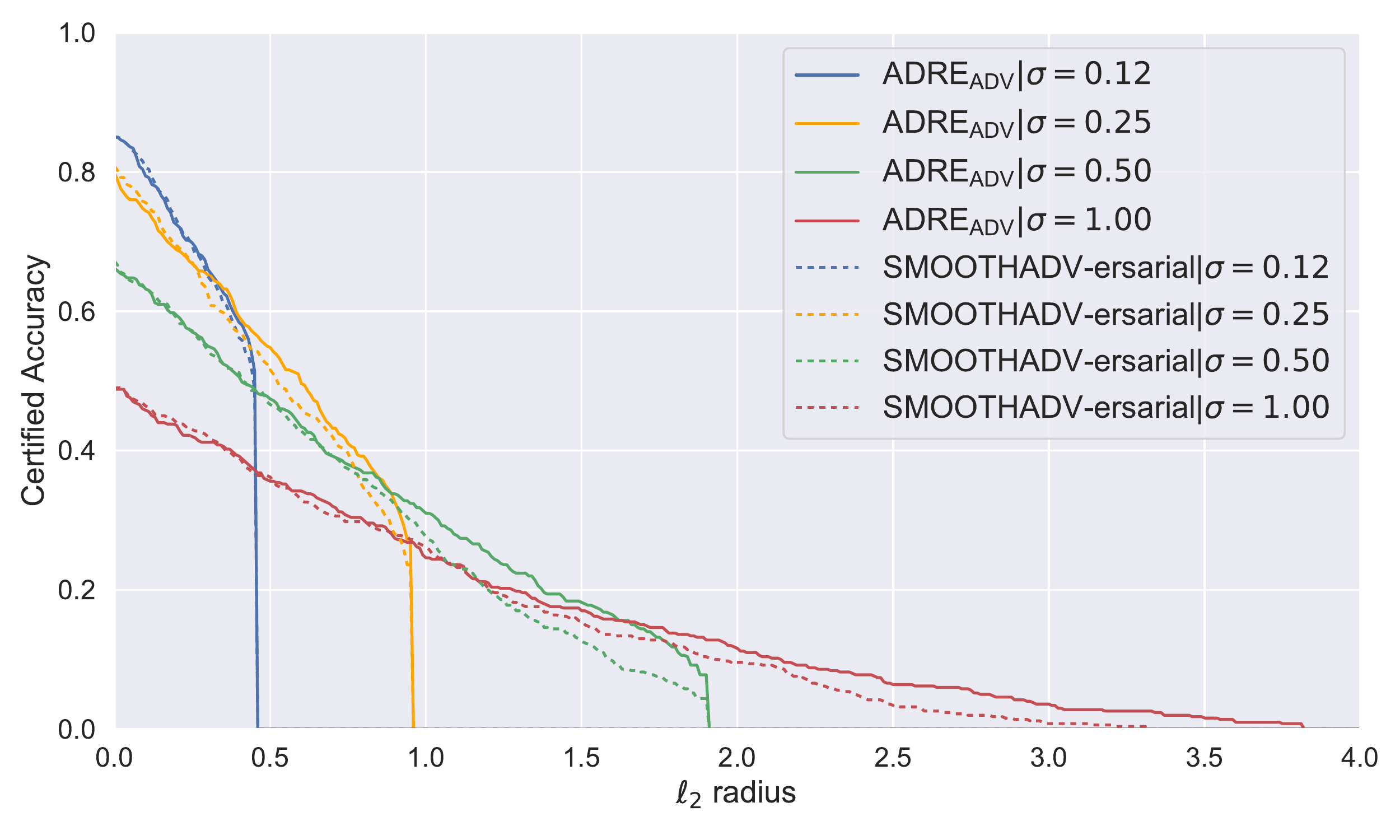} }
		\subfigure[]{\includegraphics[width=60mm,height = 30mm]{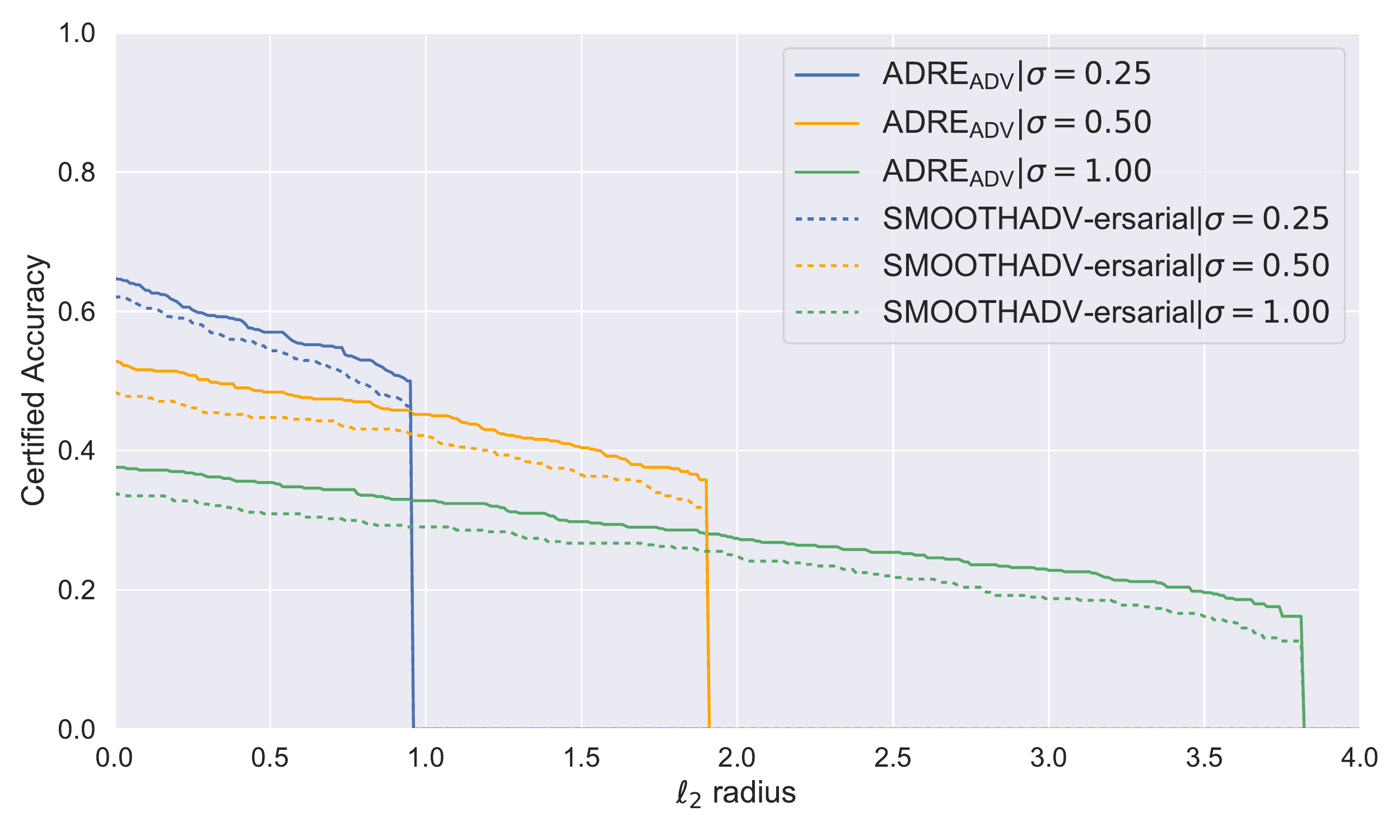}}
		\caption{Certified accuracy of smoothed classifier trained with $\text{ADRE}_{\text{ADV}}$ (solid line) vs SMOOTHADV-ersarial (dashed line) on (a) CIFAR-10 and (b) ImageNet. 
		}\label{plot-adv}
	\end{figure*}
	
	\begin{figure*}[ht]
		\centering
		\subfigure[]{\includegraphics[width=60mm,height = 30mm]{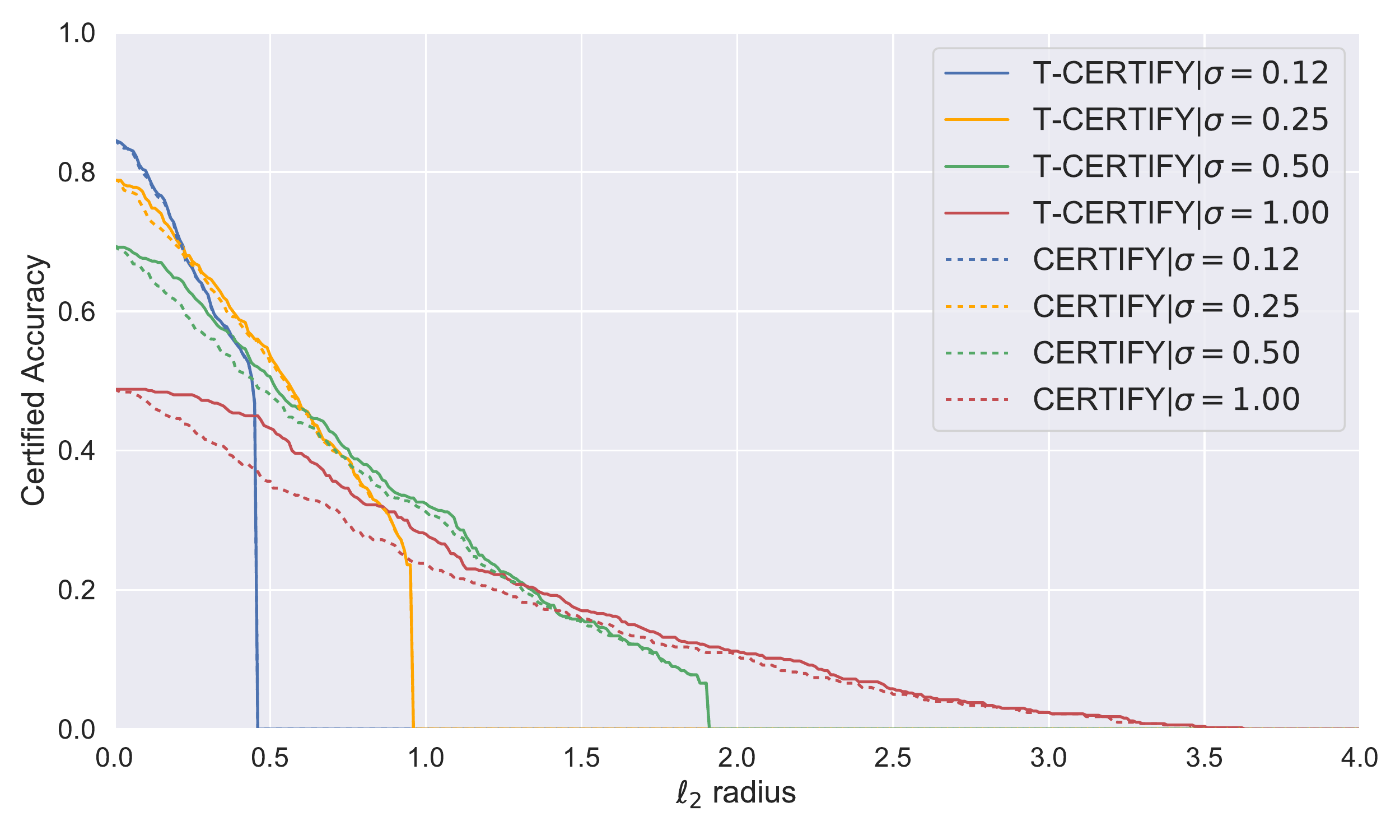} }
		\subfigure[]{\includegraphics[width=60mm,height = 30mm]{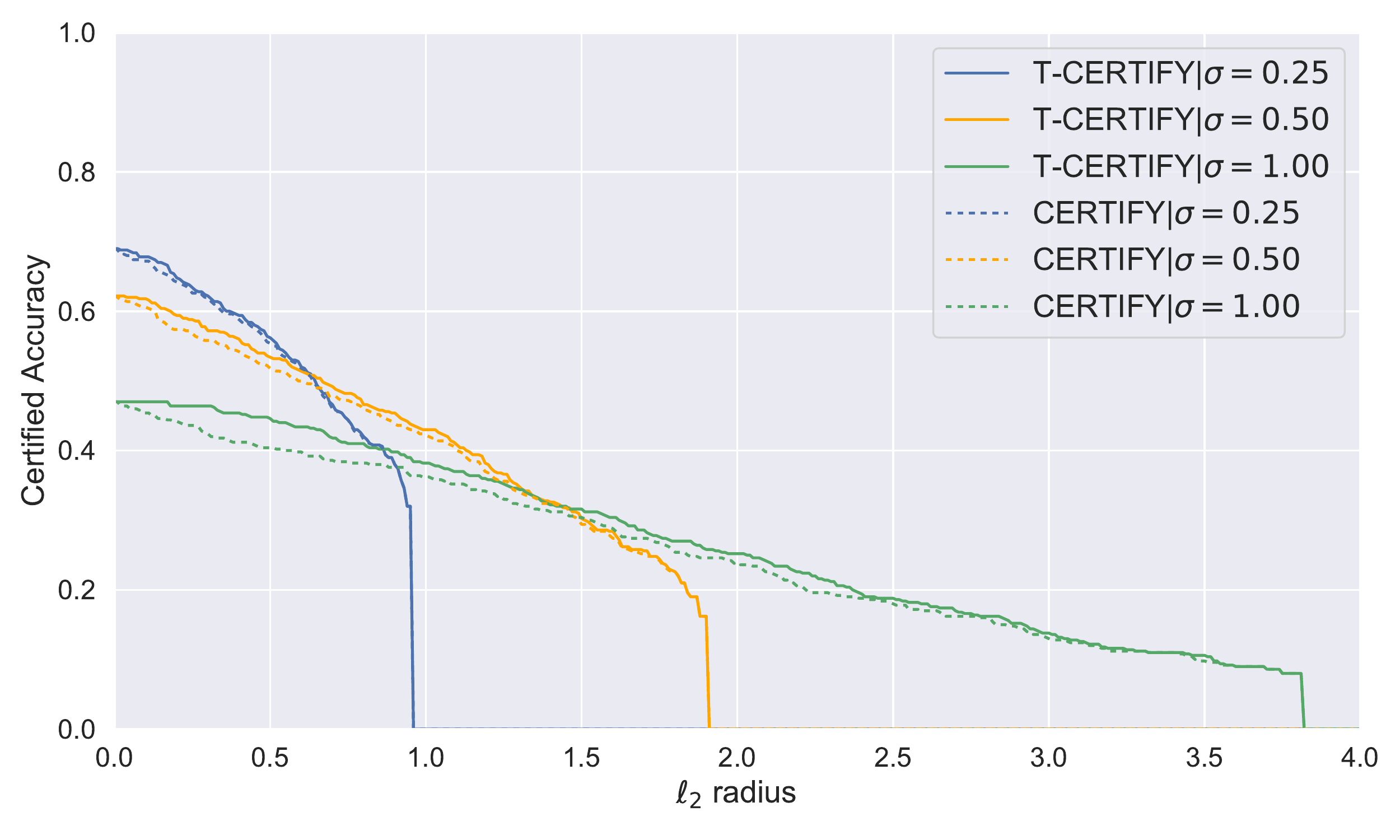}}
		\caption{Certified accuracy of one representative base classifier certified by our T-CERTIFY (solid line) vs CERTIFY (dashed line) on (a) CIFAR-10 and (b) ImageNet }\label{plot-certify}
	\end{figure*}
	We firstly assess the performance of $\text{ADRE}_{\text{REG}}$ and $\text{ADRE}_{\text{ADV}}$ training. We run experiments on CIFAR-10 \cite{krizhevsky2009learning} and ImageNet \cite{deng2009imagenet} datasets. 
	Consistent to compared work, we employ a 110-layer residual network and ResNet-50 as the base classifier for CIFAR-10 and ImageNet, respectively. For adversarial training, we used a constant step size $\alpha$ = $2\epsilon/M$ with $M$ being the number of attack iterations, and $\epsilon$ being the $\ell_2$ attack radius. On CIFAR-10, we trained the classifier using SGD on a single NVIDIA Tesla V100 GPU. We used a batch size of 400 with initial learning rate 0.1 which drops by a factor of 10 every 50 epochs, in the total 150 epochs. 
	On ImageNet, we trained the classifier on 4 NVIDIA Tesla V100 GPU using synchronous SGD with batch size 256 when $k = 1$ and 64 when $k = 4$, where $k$ is the number of Gaussian perturbations for plug-in estimates. We also used momentum (0.875), weight decay (1/32768), label smoothing (0.1) and cosine learning rate schedule for 50 epochs in total, where we set $0.1 \cdot epoch/8$ for warm-up and $0.05 \cdot (1 + cos(\pi \cdot epoch / (50 - 8)))$ afterwards. For both datasets, we trained the base classifier with random horizontal flips and random crops. Similar to compared work, the certified radii are with respect to original coordinate for direct comparison. We also added a centering layer as the first layer of the base classifier, which performed a channel-wise standardization, as implemented in \cite{salman2019provably}. 
	
	Table \ref{table-main} reports the approximate top-1 certified accuracy on CIFAR-10 and ImageNet comparing $\text{ADRE}_{REG}$ and Basic training. On  CIFAR-10, we train the base classifier with number of perturbations $k = 8$ and regularization $\lambda \in \{0.1,0.2,0.3\}$ for different magnitude of perturbations $\sigma \in \{0.12,0.25,0.50,1.00\}$. On ImageNet, we train with $k = 1$, $\lambda \in \{0.05, 0.1\}$ for $\sigma \in \{0.25,0.50,1.00\}$. 
	For a direct comparison, we slightly change the implementation of $\text{ADRE}_{REG}$ training on CIFAR-10. Specifically, instead of calculating $l_{per}^{(i)}$ following (\ref{plug-in-0}) as described in Algorithm \ref{algo-train}, in this experiment we only randomly sample a single perturbation for $l_{per}^{(i)}$, i.e., we let $l_{per}^{(i)} = l_{CE}(F(\bx_i + \bdelta_{ij};\btheta),y_i)$ for a random index $i \in [k]$. 
	By doing this, the only difference between $\text{ADRE}_{REG}$ and Basic training lies in ADRE regularization for both datasets. The results from Table \ref{table-main} suggests that ADRE regularization indeed improves the accuracy and robustness of smoothed classifier, where the certified robustness at zero radius is just the standard accuracy of the smoothed classifier. 
	In particular, with a proper hyper-parameter $\lambda$, for each perturbation $\sigma$ we can improve the certified radius up to 9\% on CIFAR-10 and 8\% on ImageNet without sacrificing the standard accuracy. 
	We point out that on ImageNet, there is little additional computation compared to Basic training.	We also run the original $\text{ADRE}_{REG}$ with $k = 8$ on CIFAR-10 and $k = 4$ for ImageNet. As is expected, we observe even stronger robustness at various radii when the base classifier is smoothed.
	
	In the next experiment, we compare SMOOTHADV-ersarial and the proposed $\text{ADRE}_{\text{ADV}}$ training. For demonstration, we focus on 2-step PGD adversarial training on CIFAR-10 with $k = 8$ and 1-step PGD on ImageNet with $k = 1$.  Figure \ref{plot-adv} plots the approximate certified accuracy of representative models on (a) CIFAR-10 and (b) ImageNet. Each solid line depicts the certified accuracy of a model trained by $\text{ADRE}_{\text{ADV}}$ and the dashed line depicts the certified accuracy of SMOOTHADV-ersarially trained model with the same $k$ and $\epsilon$, in which multiple Gaussian perturbation was applied for each training example on CIFAR-10. The results from Figure \ref{plot-adv} suggest that ADRE regularization is also useful under adversarial training scheme. 
	
	\subsubsection{Robustness Certification} 
	In this section, we evaluate the effectiveness of T-CERTIFY algorithm. We use the same $\alpha,n_0$ and $n$ as applied in CERTIFY. When certifying a given example $\bx$, we firstly generate a set of perturbations, and then use the same set of perturbed inputs to estimate $g(\bx)$ and calculate the certified radius. 
	This helps reduce uncertainty when comparing two approaches. 
		
	Figure \ref{plot-certify} depicts the certified accuracy from both approaches. We can observe that at each radius, T-CERTIFY yields higher certified robustness. In addition, we notice that the improvement gets more significant when $\sigma$ is larger. This is reasonable since with a larger perturbation, the confidence of the smoothed classifier may become lower. In this case, it becomes more important to estimate $\underline{p_A}$ and $\overline{p_B}$ separately in order to provide tighter lower bound for certified radius.

	\section{Conclusion}
	In this paper, we introduced a novel training procedure and certification algorithm for randomized smoothed classifier. We derived ADRE regularized risk and discussed how it can be implemented in both standard and iterative first-order adversarial training scheme. For certifying the (probabilistic) robustness of a smoothed classifier, we introduced T-CERTIFY to estimate lower bound for the $\ell_2$ robustness radius that will hold with high probability. We showed through experiments on CIFAR-10 and ImageNet datasets that ADRE regularization can improve the accuracy and $\ell_2$ robustness of the smoothed classifier, whose base classifier was trained under both standard and adversarial training scheme. We also demonstrated that T-CERTIFY can further improve the robustness guarantee based on the proposed regularization. 
	
	\bibliography{ref}
	\bibliographystyle{aaai}

\end{document}